\def\BibTeX{{\rm B\kern-.05em{\sc i\kern-.025em b}\kern-.08em
    T\kern-.1667em\lower.7ex\hbox{E}\kern-.125emX}}
\newtheorem{theorem}{Theorem}
\newtheorem{proof}{Proof}
\newacro{MTL}{Multi-task learning}
\newacro{EEG}{Electroencephalogram}
\begin{document}
\title{Combining datasets to improve model fitting}
\author{\IEEEauthorblockN{Thu Nguyen}
\IEEEauthorblockA{\textit{Dept. of Holistic Systems} \\
\textit{Simula Metropolitan}\\
Oslo, Norway \\
thu@simula.no}
\and
\IEEEauthorblockN{Rabindra Khadka$^{\star}$}
\IEEEauthorblockA{\textit{Dept. of Computer Science} \\
\textit{Oslo Metropolitan University}\\
Oslo, Norway\\
radhikak@oslomet.no}
\and
\IEEEauthorblockN{Nhan Phan $^{\star}$}
\IEEEauthorblockA{\textit{Dept. of Mathematics and Computer Science} \\
\textit{University of Science}\\
\textit{Vietnam National University in Ho Chi Minh City}\\
Ho Chi Minh city, Vietnam \\
nhanmath97@gmail.com}
\and
\IEEEauthorblockN{Anis Yazidi}
\IEEEauthorblockA{\textit{Dept. of Computer Science} \\
\textit{Oslo Metropolitan University}\\
Oslo, Norway \\
Anisy@oslomet.no}
\and
\IEEEauthorblockN{Pål Halvorsen}
\IEEEauthorblockA{\textit{Dept. of Holistic Systems} \\
\textit{Simula Metropolitan}\\
Oslo, Norway \\
paalh@simula.no}
\and
\IEEEauthorblockN{Michael A. Riegler}
\IEEEauthorblockA{\textit{Dept. of Holistic Systems} \\
\textit{Simula Metropolitan}\\
Oslo, Norway \\
michael@simula.no}
}
% \author{
% \IEEEauthorblockN{Thu Nguyen}
% \IEEEauthorblockA{\textit{dept. of Holistic Systems} \\
% \textit{SimulaMet}\\
% Oslo, Norway }%\\
% \and
% \IEEEauthorblockN{Rabindra Khadka}
% \IEEEauthorblockA{\textit{dept. of Computer Science} \\
% \textit{Oslo Metropolitan University}\\
% Oslo, Norway}
% \and
% \IEEEauthorblockN{Nhan Phan}
% \IEEEauthorblockA{\textit{dept.of Mathematics and Computer Science} \\
% \textit{University of Science}\\
% Ho Chi Minh city, Vietnam}
% \and
% \IEEEauthorblockN{Anis Yazidi}
% \IEEEauthorblockA{\textit{dept. of Computer Science} \\
% \textit{Oslo Metropolitan University}\\
% Oslo, Norway }
% \and
% \IEEEauthorblockN{Pål Halvorsen}
% \IEEEauthorblockA{\textit{dept. of Holistic Systems} \\
% \textit{SimulaMet}\\
% Oslo, Norway}
% \and
% \IEEEauthorblockN{Michael A. Riegler}
% \IEEEauthorblockA{\textit{dept. of Holistic Systems} \\
% \textit{SimulaMet}\\
% Oslo, Norway}
% }

\maketitle

\def \thefootnote{$\star$} \footnotetext{denotes equal contribution}
\begin{abstract}  
For many use cases, combining information from different datasets can be of interest to improve a machine learning model's performance, especially when the number of samples from at least one of the datasets is small. An additional challenge in such cases is that the features from these datasets are not identical, even though there are some commonly shared features among the datasets. To tackle this, we propose a novel framework called \textit{Combine datasets based on Imputation} (ComImp). In addition, we propose PCA-ComImp, a variant of ComImp that utilizes Principle Component Analysis (PCA), where dimension reduction is conducted before combining datasets. This is useful when the datasets have a large number of features that are not shared across them. Furthermore, our framework can also be utilized for data preprocessing by imputing missing data, i.e., filling in the missing entries while combining different datasets. To illustrate the performance and practicability of the proposed methods and their potential usages, we conduct experiments for various tasks (regression, classification) and for different data types (tabular data, time series data) when the datasets to be combined have missing data. We also investigate how the devised methods can be used with transfer learning to provide even further model training improvement. Our results indicate that can provide extra improvement when being used in combination with transfer learning. 
\end{abstract}
\begin{IEEEkeywords}
combining datasets, transfer learning, missing data, imputation
\end{IEEEkeywords}

\section{Introduction and Motivation}\label{sec-intro}

Machine learning algorithms usually improve as we collect more data. However, there are times when the data is scarce, such as in medicine. It is possible to combine different data sets in order to obtain more information. {In databases, a multi-table query is often used to combine data from multiple tables. However, it requires the tables need to have the same columns. In practice, however, datasets may have overlapping features but may not have identical columns.}  %The combination can either be across modalities (e.g., tabular and image data) or within the same modalities, e.g., patient records collected from different hospitals. In this study, we focus on the latter one. 
% The combination of datasets can also be differentiated based on whether we want to solve the same task or if we want to transfer the knowledge to a new task (as in transfer learning, in which features of an ImageNet-trained model are transferred to a new task of classification of medical images).
In this work, we present two novel methods that efficiently combine datasets that have overlapping features 
% from the same modalities 
to solve similar or new tasks. In addition, we are also investigating the impact of our method on transfer learning.

%Machine learning algorithms usually require abundant labeled data to perform well. This can be difficult to attain for data-scarce problems where labeled data generation is expensive or labor and time intensive. Therefore, looking at multiple data sources related to one specific problem and combining datasets is an important challenge. % that needs more attention from the research community. % In order to motivate the need for ComImp, we will start with an example. 
As a motivating example, suppose that we have two datasets
\begin{align*}\label{eq1}
\small
    \mathcal{D}_1 &= \begin{pmatrix}
    \text{person} &\text{height (cm)} & \text{weight (kg)} & \text{BSL (mg/dl)} \\
    1 & 120 & 80 & 80\\
    2 & 150 & 70 & 90\\
    3 & 140 & 80 & 85\\
    4 & 135 & 85 & 95
    \end{pmatrix},\\
% \end{equation}
% and
% \begin{equation}\label{eq2}
% \small
    \mathcal{D}_2 &= \begin{pmatrix}
    \text{person} & \text{weight (kg)} & \text{calo/meal} &  \text{BSL (mg/dl)}\\
    5 & 90 & 100  & 100\\
    6 & 85 & 150 & 95\\
    7 & 92 & 170 & 82
    \end{pmatrix}.
\end{align*}
Here, the first two columns in $\mathcal{D}_1$ (height (cm), weight (kg)) form the input $ \mathcal{X}_1$, and the last column (BSL(mg/dl)) is the label $\mathbf{y}_1$. Similarly, the first two columns in $\mathcal{D}_2$ (weight (kg), calo/meal) form the input $ \mathcal{X}_2$, and the last column (BSL (mg/dl)) is the label $\mathbf{y}_2$. 
It is worth noticing that the number of samples from $\mathcal{D}_1$ and $\mathcal{D}_2$ are different and are collected from different individuals. Specifically, $\mathcal{D}_1$ is collected from individuals 1, 2, 3, 4, while $\mathcal{D}_2$ is collected from individuals 5, 6, 7. Then, combining the information from these datasets together can help increase the sample sizes and may allow more efficient model training, prediction, and inferences. However, $\mathcal{X}_1$ and $\mathcal{X}_2$ do not have the features height and calo/meal in common, which creates a challenge in combining the two datasets together. %Moreover, in this example, we only have two datasets. What if we have multiple datasets that we want to combine? 

This motivates us to propose \textbf{\textit{ComImp}} (Combine datasets based on Imputation), a framework that allows vertically combine datasets based on missing data imputation.%, to target that purpose. 
The algorithm works by vertically stacking the datasets by overlapping features (weight) and inserting columns of missing entries to a \textit{\textbf{component dataset}} (the datasets that are supposed to be merged) if the corresponding features do not exist in the dataset but exist in some other component dataset. Finally, the algorithm conducts imputation on the stacked data.

Moreover, note that $\mathcal{D}_1, \mathcal{D}_2$ have two features that are not in common. However, in many scenarios, the number of features not in common can be up to thousands. Then, simply using ComImp to combine datasets can result in a lot of noise and bias. Therefore, in addition to ComImp, we also propose PCA-ComImp, which is a variant of ComImp that incorporate dimension reduction as a prior step to reduce the number of entries to be imputed.

%MISSING
%There are some popular approaches that readily take advantage of related datasets without combining them. Multi-task learning (\ac{MTL}) \cite{baxter2000model}is one such learning algorithm that can learn shared feature representation among datasets from different tasks. Another method is transfer learning, which involves training on one dataset and finetuning on another with pre-trained weights. Dataset merging within a single data modality is the other method for using related datasets. 

% A crucial requirement to yield a good performance from the machine learning algorithms is the need for labeled data in abundance. This can be difficult to attain for data-scarce problems where labeled data generation is expensive or labour and time intensive. 
Further, it is common for data to have missing values due to instrument malfunctioning, skipped questionnaires by participants, or data being lost at different stages of the experiment. Since our methods use imputation to combine datasets, it can also act as a preprocessing step by filling in the missing entries and creating a complete dataset. In addition, our method can also be used with data augmentation techniques to increase the sample size and improve learning even further.

In summary, our contributions are as follows: (i) We propose a novel yet simple framework, named ComImp, for combining datasets to improve model fitting; (ii) We propose PCA-ComImp, a variant of ComImp, for combining datasets with large numbers of features not in common; (iii)  We analyze the effects of imputation methods on the proposed frameworks; (iv) We illustrate the advantage of the given methods when acting as preprocessing steps (imputation) when the datasets to be combined have missing values; (v) We demonstrate how ComImp can be used with transfer learning to provide even better classification accuracy; and (vi)  We provide various experiments to show the power of the ComImp and PCA-ComImp.

The rest of this paper is organized as follows: in section \ref{sec-related}, we review some related works. Next, in sections \ref{sec-method} and \ref{sec-pca-comimp}, we present the ComImp and PCA-ComImp algorithms for combining datasets, respectively. Then, in section \ref{sec:choice}, we analyze some aspects of choosing an imputer for ComImp and PCA-ComImp. Moreover, we illustrate the effectiveness and potential use of the proposed methods in section \ref{sec-exper}. Lastly, we conclude the paper and provide potential future works in section \ref{sec-concl}. 

\section{Related Work}\label{sec-related}
A lot of research has been done on utilizing information from related datasets to improve model fitting. These approaches fall under  the fields of \ac{MTL}, transfer learning, or dataset merging within a single data modality. 

\ac{MTL} seeks to learn multiple tasks together by leveraging shared information among tasks. The tasks generally have a high degree of similarities in \ac{MTL}. This helps address the issue of the limited availability of labeled data. The scarcely available labeled datasets, even if collected for different tasks, can be aggregated for multi-task learning purposes. One of the reasons that \ac{MTL} performs better than the single-task learning paradigm is the effective learning of more general representation from the datasets.% \cite{caruana1993multitask}. %According to \cite{baxter2000model}, 
A \ac{MTL} model learns useful representations so that it is able to perform well for a new task that is closely related to the previously learned task. The \ac{MTL} models are broadly categorized as feature-based approaches and parameter-based approaches. The feature-based approach is based on the idea of sharing information. It seeks to learn common representation features among all the tasks \cite{misra2016cross}. The parameter-based approach is motivated by the notion of linking among tasks by placing priors on model parameters. Various methods have been proposed while sharing the parameters, such as task clustering method \cite{kumar2012learning}, %task relation learning method \cite{zhang2010multi},low-rank method \cite{chen2012learning} 
and decomposition method \cite{zweig2013hierarchical}. %One study by \cite{lee2021multi} uses \ac{MTL}  with the multi-exit architecture where the different datasets are integrated via interpolation. %When merging the datasets for the \ac{MTL} approach, there may be a need to impute missing data features or upsample where \ac{ComImp} method can be adopted.
% A study by \cite{kemnitz2018combining} proposes a cost function that allows integrating multiple datasets with heterogeneous labels into a joint training.  

% Another way to make use of information from related datasets is to perform transfer learning. It works by transferring the weights learned from a pre-trained model and solving another task without training a model from scratch. Transfer learning is particularly useful in scenarios with a small training dataset \cite{TLsurvey} and has achieved state-of-the-art performance in domains like natural language processing and computer vision. %\cite{zoph2016transfer,ruder2019transfer} and natural images \cite{hussain2018study}.  
% The transfer learning algorithm is applied for missing data imputation by training a model on a related task, and then fine-tuning the model on the task with missing data. The pre-trained weights provide a good initialization, which speeds up convergence and improves its performance on the imputation task. For example, the study by \cite{chen2019transfer} adopts transfer learning to use a large number of structurally incomplete data samples to improve the accuracy of fault diagnosis. Next, the work by \cite{deng2013sparse} applies transfer learning for emotion recognition from speech where features are transferred from multiple labeled speech sources. Another work by \cite{zhou2020imputing} proposes a novel method to impute missing gene expression data from DNA methylation data through a transfer learning–based neural network.

Another way to make use of information from related datasets is to perform transfer learning. The idea of transfer learning is to transfer the weights learned from a pre-trained model and solve another task without training a model from scratch. Transfer learning is particularly useful in scenarios with a small training dataset \cite{TLsurvey,
cao2010transfer}. Transfer learning has achieved state-of-the-art performance in domains like natural language processing \cite{zoph2016transfer,ruder2019transfer} and natural images \cite{hussain2018study}. For example, the study by \cite{chen2019transfer} adopts transfer learning to use a large number of structurally incomplete data samples to improve the accuracy of fault diagnosis. Next, the work by \cite{deng2013sparse} applies transfer learning for emotion recognition from speech where features are transferred from multiple labeled speech sources. Another work by \cite{zhou2020imputing} proposes a novel method to impute missing gene expression data from DNA methylation data through a transfer learning–based neural network.  {On the other hand, for skin analysis, TATL \cite{nguyen2022tatl} is a segmentation framework that detects lesion skin regions and then transfers this knowledge to a set of attribute-specific classifiers to detect each particular attribute, which shows better results compared to initialization using weights pre-trained on ImageNet. 
Even though transfer learning is widely used due to its efficiency, it cannot be used when deep learning is not applicable, such as when the number of features is bigger than the number of samples. Therefore, in such cases, combining datasets is more appropriate.} 

For the third direction, some works in combining datasets exist. For example,  in business intelligence, the work reported in \cite{liu2020discovering} tries to customize an analytic dataset at hand with additional features from other datasets by a model that utilizes semantic relationships to deal with the problem. Another example is \cite{rodrigues2020dimensionality}, which presents a transfer learning method for merging brain-computer interface datasets with different dimensionalities for business analytic data. %The method works by mapping each dataset into a common space and then using domain adaptation to match the distribution of the data. Yet, since the technique relies on deep learning, it may not be applicable for combining datasets of small sizes. 
Next, \cite{verma2014imputation} proposes a pipeline for imputing and combining genomic datasets, where the merge is conducted based on the set of overlapping SNPs. Note that our work should not be confused with this work since our frameworks use imputation for combining the datasets, while \cite{verma2014imputation} uses imputation as a preprocessing step. Next, \cite{kohnen2009multiple} proposes a method based on multiple imputations for combining two confidential datasets for sharing without revealing confidential information.% Specifically, suppose there are two data matrices of sizes $n\times q, n\times p$, respectively. The authors propose a method based on multiple imputations to combine these datasets for sharing without revealing confidential information. However, a limitation of this is the number of samples from the matrices that we want to combine needed to have the same number of samples.

Even though multi-task learning methods may help improve each task's learning performance, the sizes between the datasets can be too significant to apply the same type of model. For example, if one dataset has 20 features and 100 samples while the other has 25 features and 1000 samples. Then, linear regression may be suitable for the first one, and a neural network may be more efficient for the second one. This creates challenges in using multi-task learning for these two datasets.

These factors make it even more necessary to combine datasets vertically, and ComImp is suitable for that purpose.

\section{ComImp algorithm for combining datasets}\label{sec-method}

\begin{algorithm}
\caption{\textbf{ComImp algorithm} }\label{alg-comImp}
\hspace*{\algorithmicindent} \textbf{Input:} 
\begin{enumerate}
    \item datasets $\mathcal{D}_i = \{ \mathcal{X}_i, \mathbf{y}_i\}, i = 1,...,r$
    \item $\mathcal{F}_i$: set of features in $\mathcal{X}_i$
    \item $g(\mathcal{X}, \mathcal{H})$: a transformation that rearranges features in $\mathcal{X}$ to follow the order of the features in the set of features $\mathcal{H}$ and insert empty columns if a feature in $\mathcal{H}$ does not exist in $\mathcal{X}$
    \item Imputer $I$.
\end{enumerate}

\hspace*{\algorithmicindent} \textbf{Procedure:} 
\begin{algorithmic}[1]
    \State $\mathcal{F} = \bigcup_{i=1}^r \mathcal{F}_i$
    \State  $\mathcal{X}_i^{*} \leftarrow g(\mathcal{X}_i, \mathcal{F})$
    \State $\mathcal{X}^* = \begin{pmatrix}
        \mathcal{X}_1^*\\
        \mathcal{X}_2^*\\
        \vdots\\
        \mathcal{X}_r^*
    \end{pmatrix}, 
    \mathbf{y} = \begin{pmatrix}
        \mathbf{y}_1\\
        \mathbf{y}_2\\
        \vdots\\
        \mathbf{y}_r
    \end{pmatrix}$
    \State $\mathcal{X} \leftarrow $ imputed version of $\mathcal{X}^*$ using imputer I.
\end{algorithmic}
\textbf{Return:} $\mathcal{D} = \{\mathcal{X}, \mathbf{y}\}$, the combined dataset of $\mathcal{D}_1,..., \mathcal{D}_r$.
\end{algorithm}
 
This section details our ComImp framework for combining datasets. The general ComImp framework is shown in Algorithm \ref{alg-comImp}. Suppose that we have datasets $\mathcal{D}_1 = \{\mathcal{X}_1, \mathbf{y}_1\}$,...,
$\mathcal{D}_r =  \{\mathcal{X}_r, \mathbf{y}_r\}$. 
In addition, assume that the $(p_1+i)^{th}$ column of $\mathcal{D}_1$ and $i^{th}$ column of $\mathcal{D}_2$ refer to the same feature. Let $\mathcal{F}$ be the union of the sets of features in $\mathcal{X}_1, \mathcal{X}_2,..., \mathcal{X}_r$ in an ordered way (step 1 of the algorithm). Then, in step 2 of the algorithm, we rearrange features in each $\mathcal{X}_i$ to follow the order of the features in $\mathcal{F}$  and insert an empty column if a feature in $\mathcal{F}$ does not exist in $\mathcal{X}_i$. This results in $\mathcal{X}_i^*$ for each $i=1,...,r$. Next, in step 3, we stack $\mathcal{X}_i, i=1,..,r$ vertically and do similarly for the corresponding labels to get $\mathcal{X}^*$ and $\mathbf{y}$. Finally, we impute the data $\mathcal{X}^*$ to fill in these entries and create a united, complete dataset.

As an illustration, we use the datasets $\mathcal{D}_1, \mathcal{D}_2$ in Section \ref{sec-intro}. A newly merged dataset $\mathcal{D}$ can be formed by the following process:
\begin{enumerate}
    \item Step 1: The union of set of features $\mathcal{F}$ consists of \textit{height, weight, and calo/meal},
    \item Step 2:
    \begin{align*}
    \small
        \mathcal{X}^*_1&= 
        \begin{pmatrix}
    \text{person} &\text{height} & \text{weight} & \text{(calo/meal)}\\
    1 & 120 & 80 & *\\
    2 & 150 & 70 & *\\
    3 & 140 & 80 & *\\
    4 & 135 & 85 & *
    \end{pmatrix},\\
% \end{equation*}
% and 
%     \begin{equation*}
    % \small
        \mathcal{X}^*_2&= 
        \begin{pmatrix}
    \text{person} &\text{height} & \text{weight} & \text{calo/meal}\\
    5 & * & 90 & 100 \\
    6 & * & 85 & 150\\
    7 & * & 92 & 170
    \end{pmatrix},
\end{align*}

    \item Stack $\mathcal{X}_1^*, \mathcal{X}_2^*$ vertically, and stack $\mathbf{y}_1,  \mathbf{y}_2$ vertically
    \begin{equation*}
    \small
        \mathcal{X}^*= 
        \begin{pmatrix}
    \text{person} &\text{height} & \text{weight} & \text{calo/meal}\\
    1 & 120 & 80 & *\\
    2 & 150 & 70 & *\\
    3 & 140 & 80 & *\\
    4 & 135 & 85 & *\\
    5 & * & 90 & 100 \\
    6 & * & 85 & 150\\
    7 & * & 92 & 170
    \end{pmatrix},
\end{equation*}
and 
\begin{equation*}
    \mathbf{y}% = \begin{pmatrix}
    %     \mathbf{y}_1\\ 
    %     \mathbf{y}_2
    % \end{pmatrix} 
    = \begin{pmatrix}
        80\\ 90 \\ 85\\ 95\\
        100 \\ 95\\82
    \end{pmatrix}
\end{equation*}
\item $\mathcal{X}=$ imputed version of $\mathcal{X}^*$. Suppose that mean imputation is being used, then
    \begin{equation*}
    \small
        \mathcal{X}=%\begin{matrix}
            % \text{person} \;\;\; \text{height} \;\;\;  \text{weight} \;\;\;\;  \text{calo/meal}\\
        \begin{pmatrix}
    \text{(person)} &\text{(height)} & \text{(weight)} & \text{(calo/meal)}\\
    1 & 120 & 80 & 140\\
    2 & 150 & 70 & 140\\
    3 & 140 & 80 & 140\\
    4 & 135 & 85 & 140\\
    5 & 136.25 & 90 & 100 \\
    6 & 136.25 & 85 & 150\\
    7 & 136.25 & 92 & 170
    \end{pmatrix},
    % \end{matrix}
\end{equation*}
\item $\mathcal{D}=(\mathcal{X},\mathbf{y}).$
\end{enumerate}

\section{Combining datasets with dimension reduction (PCA-ComImp)}\label{sec-pca-comimp}
In many scenarios, the number of features that is available in one dataset but not available in other datasets may be very large. Therefore, if ComImp is applied directly to those datasets, the imputation process may introduce too many noises and biases into the combined dataset that it may suppress the advantages of combining datasets. Then, it would be preferable to conduct dimension reduction on the non-overlapping features of the datasets to reduce the number of values that must be imputed. This is not only to reduce the bias introduced by imputation to the model but also to speed up the imputation process. To start, we adapt similar notations to \cite{nguyen2022principle}, let $pca(A)$ be a function of a data matrix $A$. % and the minimum percentage of variance that should be explained by the chosen vectors of PCA $v$. 
The function returns $(\mathcal{R}_A, V)$ where $\mathcal{R}_A$ is the PCA-reduced version of $A$, and $V$ is the projection matrix where the $i^{th}$ column of $V$ is the eigenvector corresponding to the $i^{th}$ largest eigenvalue. To simplify notations, we also sometimes use the notation $\mathcal{F}$ that refers to a set of features to indicate a subset of a dataset that contains only these features.
Then, we have Algorithm \ref{alg-PCAcomImp} for combining two datasets with dimension reduction.

The algorithm starts by finding the union, intersection, and complements of feature sets as in step 1 of the procedure. Note that $\mathcal{R}_1 = \mathcal{F}_1\setminus\mathcal{F}_2$ means that $\mathcal{R}_1$ is the set of features that belong to $\mathcal{D}_1$ but not $\mathcal{D}_2$. Similar for $\mathcal{R}_2$. Then, in the second step, the procedure uses PCA to find the reduced version and the projection matrix $V$ of $\mathcal{S}_i^{(tr)}$ (the subset of the training portion of $\mathcal{D}_i$ that consists of only the features $\mathcal{S}_i$). In step 3, we find the set $\mathcal{H}$ of the union of features between the common features of $\mathcal{F}_1, \mathcal{F}_2$, the set of reduced features of the features that are available in $\mathcal{F}_1$ but not in $\mathcal{F}_2$, and the set of reduced features of the features that are available in $\mathcal{F}_2$ but not in $\mathcal{F}_1$. Next, similar to ComImp, we create $\mathcal{X}_i^*$ in step 4, stack them together in step 5, and impute them in step 6.

To combine multiple datasets that have many features not in common, one can operate sequentially. For example, if we have $\mathcal{D}_1,..., \mathcal{D}_r$ then we can first combine $\mathcal{D}_1$ and $\mathcal{D}_2$ to get $\mathcal{D}_{12}$, and then merge $\mathcal{D}_{12}$ with $\mathcal{D}_3$ to get $\mathcal{D}_{123}$ and so on. However, attention should be paid to the number of entries to be imputed. Otherwise, the noise and biases introduced by imputing too many missing values may surpass the benefits of combining information.

\begin{algorithm}
\caption{\textbf{PCA-ComImp} }\label{alg-PCAcomImp}
\hspace*{\algorithmicindent} \textbf{Input:} 
\begin{enumerate}
    \item Datasets $\mathcal{D}_i = \{ \mathcal{X}_i, \mathbf{y}_i\}, i = 1,2$ consists of $\mathcal{D}_i^{(tr)} = \{ \mathcal{X}_i^{(tr)}, \mathbf{y}_i^{(tr)}\}, i = 1,2$ as the training sets and $\mathcal{D}_i^{(ts)} = \{ \mathcal{X}_i^{(ts)}, \mathbf{y}_i^{(ts)}\}$ as the test sets,
    \item $\mathcal{F}_i$: set of features in $\mathcal{X}_i$,
    \item $g(\mathcal{X}, \mathcal{H})$: a transformation that rearranges features in $\mathcal{X}$ to follow the order of the features in the set of features $\mathcal{H}$ and insert empty columns if a feature in $\mathcal{H}$ does not exist in $\mathcal{X}$
    \item Imputer $I$, classifier $\mathcal{C}$.
\end{enumerate}

\hspace*{\algorithmicindent} \textbf{Procedure:} 
\begin{algorithmic}[1]
    \State $\mathcal{F} =  \mathcal{F}_1 \cup \mathcal{F}_2, \; \mathcal{S} = \mathcal{F}_1 \cap \mathcal{F}_2, \; \mathcal{Q}_1 = \mathcal{F}_1\setminus \mathcal{F}_2, \; \mathcal{Q}_2 = \mathcal{F}_2\setminus \mathcal{F}_1$
    \State $(\mathcal{R}_i^{(tr)}, V) \leftarrow pca(\mathcal{Q}_i^{(tr)})$ and  
    $\mathcal{R}_i^{(ts)} \leftarrow \mathcal{Q}_i^{(ts)}V$
    \State $\mathcal{H} = \mathcal{S} \cup  \mathcal{R}_1 \cup \mathcal{R}_2$
    \State $\mathcal{X}_i^* \leftarrow g(\mathcal{S} \cup \mathcal{R}_i, \mathcal{H}), i = 1, 2$
    \State $\mathcal{X}^* = \begin{pmatrix}
    \mathcal{X}_1^*\\
    \mathcal{X}_2^*
    \end{pmatrix}, 
    \mathbf{y} = \begin{pmatrix}
    \mathbf{y}_1\\
    \mathbf{y}_2
    \end{pmatrix}$
    \State $\mathcal{X} \leftarrow $ imputed version of $\mathcal{X}^*$ using imputer I.
\end{algorithmic}

\textbf{Return:} $\mathcal{D} = \{\mathcal{X}, \mathbf{y}\}$, the combined dataset of $\mathcal{D}_1,\mathcal{D}_2$.
\end{algorithm}
\section{Choices of missing data imputation methods}\label{sec:choice} 
% Again, the imputer for imputing $\mathcal{X}^*$ can be any suitable method such as SoftImpute \cite{mazumder2010spectral}, MissForest \cite{stekhoven2012missforest}, etc.
Even though there are many imputation methods that one can use, several factors should be considered when choosing the imputer for ComImp, such as speed, imputation quality, and the ability to predict sample by sample.
When the datasets to be combined are small, methods that are slow but can give promising imputation quality, such as kNN, MICE \cite{buuren2010mice}, missForest \cite{stekhoven2012missforest}, can be considered. However, these methods can be too slow for moderate large-size \cite{NGUYEN2022108082}, and therefore cannot be used for high dimensional data.
Next, matrix decomposition methods may not be suitable in many scenarios, such as when the data is not of low rank. In addition, since the imputation is done via matrix factorization, those methods are not suitable for imputation in online learning, which requires the handling of each sample as it comes \cite{nguyen4260235pmf}. Moreover, when the uncertainty of imputed values is of interest, then Bayesian techniques can be used. However, Bayesian imputation techniques can be slow. Recently, DIMV \cite{vu2023conditional} was introduced as a scalable imputation method that estimates the conditional distribution of the missing values based on a subset of observed entries. As a result, the method provides an explainable imputation, along with the confident region, in a simple and straight forward manner.

Note that the choice of imputation methods can affect the performance of the model trained on the merged dataset as well. An example can be seen via the following theorem
\begin{theorem}
Assume that we have two datasets $\mathcal{D}_1 = \{\mathbf{U}, \mathbf{y}\}, \mathcal{D}_2 = \{\mathbf{V}, \boldsymbol{z}\}$ where $\mathbf{U}$ 
%is an one-dimensional input
, $\mathbf{V}$
%is a two-dimensional 
are inputs, and $\mathbf{y}, \boldsymbol{z}$ are the labels, such that
% \vspace{-20pt}
\begin{align}
    \mathbf{U} &= (\mathbf{1}_n \; \;\mathbf{u}_1)  = \begin{pmatrix}
        1 & u_{11}\\
        1 & u_{21}\\
        \vdots &\vdots\\
        1 & u_{n1}
    \end{pmatrix}, \\
    \mathbf{V} &= (\mathbf{1}_m \; \;\mathbf{v}_1 \; \;\mathbf{v}_2)  = \begin{pmatrix}
        1 & v_{11} & v_{12}\\
        1 & v_{21} & v_{22} \\
        \vdots &\vdots & \vdots\\
        1 & v_{m1} & v_{m2}
    \end{pmatrix},    
\end{align}
where $u_{ij}, v_{ij}\in \mathbb{R}$, and 
\begin{equation}
    \mathbf{y}=\begin{pmatrix}
    
        y_1\\y_2\\\vdots\\y_n
    \end{pmatrix}, \boldsymbol{z}=\begin{pmatrix}
        z_1\\z_2\\\vdots\\z_m
    \end{pmatrix}, \mathbf{Y} = \begin{pmatrix}
        \mathbf{y}\\\boldsymbol{z} 
    \end{pmatrix}.
\end{equation}

Next, let $\bar{\mathbf{v}}_2$ be the mean of $\mathbf{v}_2$. If mean imputation is being used for ComImp then the resulting combined input is 
% \vspace{-10pt}
\begin{equation}
\mathbf{X} =  \begin{pmatrix}
    1_n & \mathbf{u}_1 & \bar{\mathbf{v}}_2\\
    1_m & \mathbf{v}_1 & \mathbf{v}_2
\end{pmatrix}.
\end{equation}
 Then, we have the following relation between the sum of squared errors ($SSE$) of the model fitted on $\mathcal{D} = \{\mathbf{X}, \mathbf{Y}\}$, 
 % \vspace{-15pt}
\begin{equation}\label{equation-inq}
    SSE_{\mathcal{D}} \ge SSE_{\mathcal{D}_1}+ SSE_{\mathcal{D}_2}.
\end{equation}
\end{theorem}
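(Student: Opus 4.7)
My plan is to view each $SSE$ as the minimum value of an ordinary least squares objective and to exhibit the joint OLS problem on $\mathcal{D}=\{\mathbf{X},\mathbf{Y}\}$ as a constrained version of the two separate OLS problems on $\mathcal{D}_1$ and $\mathcal{D}_2$. Any such restriction of the feasible region can only increase the minimum, which is exactly the desired inequality (\ref{equation-inq}).

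Concretely, I would first expand the joint residual sum of squares row-wise:
\begin{equation*}
SSE_{\mathcal{D}} = \min_{\beta_0,\beta_1,\beta_2}\Bigl(\|\mathbf{y}-\beta_0\mathbf{1}_n-\beta_1\mathbf{u}_1-\beta_2\bar{\mathbf{v}}_2\mathbf{1}_n\|^2+\|\boldsymbol{z}-\beta_0\mathbf{1}_m-\beta_1\mathbf{v}_1-\beta_2\mathbf{v}_2\|^2\Bigr),
\end{equation*}
noting that the third column of $\mathbf{X}$ restricted to the top block is the scalar $\bar{\mathbf{v}}_2$ repeated $n$ times, i.e., the vector $\bar{\mathbf{v}}_2\mathbf{1}_n$. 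The key observation is that this imputed column is collinear with the intercept on the first $n$ rows, so $\beta_0\mathbf{1}_n+\beta_2\bar{\mathbf{v}}_2\mathbf{1}_n=\alpha\mathbf{1}_n$ with $\alpha:=\beta_0+\beta_2\bar{\mathbf{v}}_2$. After this reparametrization the first summand is identical in form to the OLS residual for $\mathcal{D}_1$ and the second to the OLS residual for $\mathcal{D}_2$, but with the coefficients still entangled through the shared parameters.

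The comparison step then follows by constructing an objective-preserving map from joint parameters to a pair of separate parameter tuples: for any $(\beta_0,\beta_1,\beta_2)$, set $\gamma_0=\beta_0+\beta_2\bar{\mathbf{v}}_2$, $\gamma_1=\beta_1$, $\delta_0=\beta_0$, $\delta_1=\beta_1$, $\delta_2=\beta_2$, so that the joint objective evaluated at $(\beta_0,\beta_1,\beta_2)$ equals $\|\mathbf{y}-\gamma_0\mathbf{1}_n-\gamma_1\mathbf{u}_1\|^2+\|\boldsymbol{z}-\delta_0\mathbf{1}_m-\delta_1\mathbf{v}_1-\delta_2\mathbf{v}_2\|^2$, which is at least $SSE_{\mathcal{D}_1}+SSE_{\mathcal{D}_2}$ by definition of the two separate OLS minima. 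Minimizing the left-hand side over $(\beta_0,\beta_1,\beta_2)$ then yields (\ref{equation-inq}). The main obstacle, such as it is, is expository rather than algebraic: one must argue carefully that the joint minimization is a restriction of the product minimization, not the other way around, and notice the collinearity that makes the first block decouple from $\beta_2$ without losing a degree of freedom. Once these two observations are in place the inequality is immediate.
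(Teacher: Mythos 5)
Your proposal is correct, and it takes a genuinely different route from the paper. The paper proves the inequality computationally: it rewrites each $SSE$ via hat matrices, reduces the claim to $\mathbf{y}'\mathbf{H}_u\mathbf{y}+\boldsymbol{z}'\mathbf{H}_v\boldsymbol{z}\ge \mathbf{Y}'\mathbf{H}_x\mathbf{Y}$, centers the covariates so that $\bar{\mathbf{v}}_2=0$, computes the three quadratic forms explicitly by inverting $\mathbf{U}'\mathbf{U}$, $\mathbf{V}'\mathbf{V}$, $\mathbf{X}'\mathbf{X}$, and finishes with two applications of the Cauchy--Schwarz inequality in Engel form. Your argument instead exploits the model-nesting structure: for any joint coefficient vector $(\beta_0,\beta_1,\beta_2)$ the combined residual splits blockwise, the imputed column on the top block is the constant vector $\bar{\mathbf{v}}_2\mathbf{1}_n$ and hence collinear with the intercept, so the top-block term is a feasible residual for the regression on $\mathcal{D}_1$ (with intercept $\beta_0+\beta_2\bar{\mathbf{v}}_2$) and the bottom-block term is a feasible residual for the regression on $\mathcal{D}_2$; each is therefore bounded below by the corresponding separate minimum, and minimizing over $\beta$ gives the inequality. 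This is logically sound as stated. What your approach buys is brevity and generality: it needs no matrix inverses, no centering normalization, and no Cauchy--Schwarz, and it extends verbatim to any number of datasets, any number of shared and unshared features, and any imputation that fills the missing block with values lying in the span of that block's own design columns (mean imputation always yields constant columns, absorbed by the intercept). What the paper's computation buys, in exchange, is an explicit closed form for each quadratic form, which in principle quantifies how large the gap $SSE_{\mathcal{D}}-(SSE_{\mathcal{D}_1}+SSE_{\mathcal{D}_2})$ is rather than merely establishing its sign.
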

% \vspace{-10pt}
% The proof of the theorem is given in supplementary material.
\begin{proof}
Note that equation \eqref{equation-inq} is equivalent to 
% \vspace{-15pt}
\begin{equation}
    \mathbf{Y}'(I-\mathbf{H}_x)\mathbf{Y}\le
    \mathbf{y}'(I-\mathbf{H}_u)\mathbf{y} + \mathbf{z}'(I-\mathbf{H}_v)\mathbf{z},
\end{equation}
where 
\begin{align}
    \mathbf{H}_u &= \mathbf{U}(\mathbf{U}'\mathbf{U})^{-1}\mathbf{U}',\\ 
    \mathbf{H}_v &= \mathbf{V}(\mathbf{V}'\mathbf{V})^{-1}\mathbf{V}', \\
    \mathbf{H}_x &= \mathbf{X}(\mathbf{X}'\mathbf{X})^{-1}\mathbf{X}'.
\end{align}
Hence, we want to prove that 
% \vspace{-10pt}
\begin{equation}
    \mathbf{y}'\mathbf{H}_u \mathbf{y}+\boldsymbol{z}'\mathbf{H}_v \boldsymbol{z}\ge \mathbf{Y}'\mathbf{H}_x\mathbf{Y},
\end{equation}
% \vspace{-5pt}
    Without loss of generality, assume that the data is centered so that $\sum\limits_{i=1}^n u_{i1} = 0$, $\sum\limits_{i=1}^n v_{i1} = 0$, $\sum\limits_{i=1}^n v_{i2} = 0$, so $\bar{\mathbf{v}}_2 = 0$. Next, let $\alpha = \sum\limits_{i=1}^ny_i$, $\alpha_1 = \sum\limits_{i=1}^n y_iu_{i1}$, $a_1=\sum\limits_{i=1}^n u_{i1}^2$, we have
    \vspace{-5pt}
    \begin{align*}
        \mathbf{y}'\mathbf{U} &= \begin{pmatrix}
        \sum\limits_{i=1}^ny_i & \sum\limits_{i=1}^n y_iu_{i1}
        \end{pmatrix} = \begin{pmatrix}
            \alpha & \alpha_1
        \end{pmatrix},\\
    % \end{equation}
    % \begin{equation}
        \mathbf{U}'\mathbf{U} &= \begin{pmatrix}
            1 & \sum\limits_{i=1}^n u_{i1}\\
            \sum\limits_{i=1}^n u_{i1} & \sum\limits_{i=1}^n u_{i1}^2
        \end{pmatrix} = \begin{pmatrix}
            n & 0 \\ 0 & a_1
        \end{pmatrix},\\
    % \end{equation}
    % \begin{equation}
        \mathbf{U}'\mathbf{y} &= \begin{pmatrix}
            \sum\limits_{i=1}^n y_i \\ \sum\limits_{i=1}^n y_iu_{i1}
        \end{pmatrix} = \begin{pmatrix}
            \alpha \\ \alpha_1
        \end{pmatrix}.
    \end{align*}
    % \vspace{-15pt}
    This implies 
    % \vspace{-5pt}
    \begin{align}\label{yHuy}
        \mathbf{y}'\mathbf{H}_u \mathbf{y} &=  \mathbf{y}'\left(\mathbf{U}(\mathbf{U}'\mathbf{U})^{-1}\mathbf{U}'\right) \mathbf{y}
        = \left(\mathbf{y}'\mathbf{U}\right)\left(\mathbf{U}'\mathbf{U}\right)^{-1} \left(\mathbf{U}'\mathbf{y}\right) \notag\\
        % &= \frac{1}{n\cdot\sum\limits_{i=1}^n u_{i1}^2}\left[\left(\sum\limits_{i=1}^n y_i\right)^2\cdot\sum\limits_{i=1}^n u_{i1}^2 + n\cdot\left(\sum\limits_{i=1}^n y_iu_{i1}\right)^2\right]\\
        &= \frac{\alpha^2}{n} + \frac{\alpha_1^2}{a_1}.
    \end{align}
Similarly, let $\beta = \sum\limits_{i=1}^m z_i$, $\beta_1 = \sum\limits_{i=1}^m z_iv_{i1}$, $\beta_2 = \sum\limits_{i=1}^m z_iv_{i2}$, $b_1=\sum\limits_{i=1}^m v_{i1}^2$,
$b_1=\sum\limits_{i=1}^m v_{i2}^2$,
$c=\sum\limits_{i=1}^m v_{i1}v_{i2}$. Then, 
    \begin{align*}
    % \small
        \mathbf{z}'\mathbf{V} &= \begin{pmatrix}
            \sum\limits_{i=1}^m z_i & \sum\limits_{i=1}^m z_iv_{i1} & \sum\limits_{i=1}^mz_iv_{i12}
        \end{pmatrix}\\
        &= \begin{pmatrix}
            \beta & \beta_1 & \beta_2
        \end{pmatrix},\notag\\
        \mathbf{V}'\mathbf{V} &= \begin{pmatrix}
            m & \sum\limits_{i=1}^m v_{i1} &\sum\limits_{i=1}^m v_{i2}\\
            \sum\limits_{i=1}^m v_{i1} & \sum\limits_{i=1}^m v_{i1}^2 & \sum\limits_{i=1}^m v_{i1}v_{i2}\\
            \sum\limits_{i=1}^m v_{i2} &\sum\limits_{i=1}^m v_{i1}v_{i2} &\sum\limits_{i=1}^m v_{i2}^2
        \end{pmatrix}\\
        &= \begin{pmatrix}
            m & 0 & 0 \\
            0 & b_1 & c\\
            0 & c & b_2
        \end{pmatrix},\notag\\
\mathbf{V}'\mathbf{z} &= \begin{pmatrix}
            \sum\limits_{i=1}^m z_i \\\sum\limits_{i=1}^m z_iv_{i1} \\ z_iv_{i12}
        \end{pmatrix} = \begin{pmatrix}
            \beta \\ \beta_1 \\ \beta_2
        \end{pmatrix}.
            \end{align*}
Therefore,
    \begin{align}\label{zHzz}
        \mathbf{z}'\mathbf{H}_z \mathbf{z} &=  \mathbf{z}'\left(\mathbf{V}(\mathbf{V}'\mathbf{V})^{-1}\mathbf{V}'\right) \mathbf{z}
        = \left(\mathbf{z}'\mathbf{V}\right)\left(\mathbf{V}'\mathbf{V}\right)^{-1} \left(\mathbf{V}'\mathbf{z}\right) \notag\\
        &= \frac{\beta^2}{m} + \frac{\beta_1^2b_2+\beta_2^2b_1-2\beta_1\beta_2c}{b_1b_2-c^2}\notag\\
        % &= \frac{\beta^2}{m} + \frac{\beta_1^2b_2+\beta_2^2b_1-2\beta_1\beta_2c}{b_1b_2-c^2}\notag\\
        &= \frac{\beta^2}{m} + \frac{\left(\beta_1b_2-\beta_2c\right)^2}{b_2\left(b_1b_2-c^2\right)} + \frac{\beta_2^2}{b_2}.
    \end{align}
    \vspace{-5pt}
    Besides,
    \begin{align*}
        \mathbf{Y}'\mathbf{X}&= \begin{pmatrix}
            \sum\limits_{i=1}^n y_i +\sum\limits_{i=1}^m z_i & \sum\limits_{i=1}^n y_iu_{i1} +\sum\limits_{i=1}^m z_iv_{i1} & \sum\limits_{i=1}^mz_iv_{i12}
        \end{pmatrix}\\
        &= \begin{pmatrix}
            \alpha + \beta & \alpha_1 + \beta_1 & \beta_2
        \end{pmatrix}, \notag\\
        \mathbf{X}'\mathbf{X} &= \begin{pmatrix}
            m & \sum\limits_{i=1}^n u_{i1} +\sum\limits_{i=1}^m v_{i1} &\sum\limits_{i=1}^m v_{i2}\\
            \sum\limits_{i=1}^n u_{i1} +\sum\limits_{i=1}^m v_{i1} & \sum\limits_{i=1}^n u_{i1}^2 + \sum\limits_{i=1}^m v_{i1}^2 & \sum\limits_{i=1}^m v_{i1}v_{i2}\\
            \sum\limits_{i=1}^m v_{i2} &\sum\limits_{i=1}^m v_{i1}v_{i2} &\sum\limits_{i=1}^m v_{i2}^2
        \end{pmatrix}\\
        &= \begin{pmatrix}
            m & 0 & 0 \\
            0 & a_1 + b_1 & c\\
            0 & c & b_2
        \end{pmatrix},\\\notag
    % \end{align*}
    % and
    % \begin{align*}
        \mathbf{X}'\mathbf{Y}&= \begin{pmatrix}
            \sum\limits_{i=1}^n y_i +\sum\limits_{i=1}^m z_i \\ \sum\limits_{i=1}^n y_iu_{i1} +\sum\limits_{i=1}^m z_iv_{i1} \\ \sum\limits_{i=1}^mz_iv_{i12}
        \end{pmatrix}= \begin{pmatrix}
            \alpha + \beta \\ \alpha_1 + \beta_1 \\ \beta_2
        \end{pmatrix}. 
    \end{align*}
Hence, 
\begin{align}\label{YHxY}
        \mathbf{Y}'\mathbf{H}_x \mathbf{Y} &=  \mathbf{Y}'\left(\mathbf{X}(\mathbf{X}'\mathbf{X})^{-1}\mathbf{X}'\right) \mathbf{Y}
        = \left(\mathbf{Y}'\mathbf{X}\right)\left(\mathbf{X}'\mathbf{X}\right)^{-1} \left(\mathbf{X}'\mathbf{Y}\right)\notag\\
        &= \frac{\left(\alpha + \beta\right)^2}{m} + \frac{\left[\alpha_1b_2+\beta_1b_2-\beta_2c\right]^2}{b_2\left[a_1b_2+b_1b_2-c^2\right]} + \frac{\beta_2^2}{b_2}.
    \end{align}
Now, applying Cauchy-Schwarz inequality, we have
\begin{equation}
        \frac{\alpha^2}{n} +\frac{\beta^2}{m}\ge \frac{\left(\alpha+\beta\right)^2}{m+n},\label{CS1}
    \end{equation}
\begin{align}\label{CS2}
        \frac{\alpha_1^2}{a_1} + \frac{\left(\beta_1b_2-\beta_2c\right)^2}{\beta_2\left(b_1b_2-c^2\right)} &= \frac{1}{b_2}\left[\frac{\alpha_1^2b_2^2}{a_1b_2} +\frac{\left(\beta_1b_2-\beta_2c\right)^2}{b_1b_2-c^2}\right]\notag\\
        &\ge \frac{\left(\alpha_1b_2+\beta_1b_2-\beta_2c\right)^2}{b_2\left(a_1b_2+b_1b_2-c^2\right)}.
    \end{align}
From \eqref{yHuy}, \eqref{zHzz}, \eqref{YHxY}, \eqref{CS1} and \eqref{CS2}, we have
% \begin{equation}
    $\mathbf{y}'\mathbf{H}_u \mathbf{y}+\boldsymbol{z}'\mathbf{H}_v \boldsymbol{z}\ge \mathbf{Y}'\mathbf{H}_x \mathbf{Y},$ as desired.
% \end{equation}
\end{proof}
From the theorem, we can see that in the given setting stated in the theorem, the SSE of the regression model fitted on the merged data is bigger than the sum of the SSE of the models trained on each component dataset. Hence, using mean imputation in this case actually leads to worse performance.  

\section{Experiments}\label{sec-exper}
In this section, we present various experiments on a variety of scenarios to explore the potential of the proposed techniques. Throughout all the following experiments, if not further commented, we use soft-impute \cite{mazumder2010spectral} as the imputation method, each experiment is repeated 10000 times before reporting the mean and variance of the results, and the train and test set are split with ratio 1:1. The descriptions of the datasets used are as in Table \ref{table_info_datasets}. The codes for the experiments are available at \href{https://github.com/thunguyen177/ComImp}{github.com/thunguyen177/ComImp}.

\begin{table}[htbp]
\caption{Descriptions of datasets used in the experiments}
\begin{center}
    \begin{tabular}{|c|c|c|c|}
		\hline
		Dataset & \# Classes & \# Features & \# Samples \\
		\hline
		Seed \cite{charytanowicz2010complete} & 3 & 7 & 210\\\hline
		Wine & 3 & 13 & 178 \\\hline
		Epileptic Seizure (\cite{eeg_seizure}) & 2 &  178 &  11,500\\\hline
		Gene & $5$ & $20531$ & $801$ \\\hline
	\end{tabular}
	\label{table_info_datasets}
\end{center}
\end{table}
% \vspace{-30pt} 
 \subsection{Regression simulation}\label{sec-regression-simul}
\begin{table}[htbp]
\caption{mean $\pm$ variance of MSE on test sets of regression models fitted on $\mathcal{D}_1,\mathcal{D}_2,\mathcal{D}$, respectively.}
\label{tab-regr}
\begin{center}
\begin{tabular}{|c|c|c|c|}
\hline
model & \bf $f_1$  &{\bf $f_2$}&{\bf $f$}\\ \hline 
MSE & 3.327 $\pm$ 0.093 & 3.044 $\pm$ 0.131 & 0.734 $\pm$ 0.002\\\hline
\end{tabular}
\end{center}
\end{table}

 \paragraph{Data generation} We run a Monte Carlo experiment for regression with 10000 repeats. For each loop, we generate a dataset $\mathcal{D}_1$ of 300 samples, and a dataset $\mathcal{D}_2$ of 200 samples
 %500 samples of regression data 
 based on the following relation
 \begin{equation}
     Y = 1 + X_1+0.5X_2+X_3+\epsilon 
 \end{equation}
where $\mathbf{X} = (X_1, X_2, X_3)^T$
% $\mathbf{X} = \begin{pmatrix}
%     X_1\\
%     X_2\\
%     X_3
% \end{pmatrix}$
follows a multivariate Gaussian distribution with the following mean and covariance matrix
\begin{equation}
    \boldsymbol {\mu }=\begin{pmatrix}
        1\\2\\0.5   
    \end{pmatrix}, 
% \end{equation}
% \begin{equation}
    \mathbf{\Sigma} = \begin{pmatrix}
        1 & 0.5 & 0.3\\
        0.5 & 1 & 0.4\\
        0.3 & 0.4 & 1
    \end{pmatrix} 
\end{equation}
and $\epsilon \sim \mathcal{ N}(0,0.2\,I)$, where $I$ is the identity matrix.

%   Then, we split the data into $\mathcal{D}_1$ of 300 samples and $\mathcal{D}_2$ of 200 samples. 
  Then, we delete the first feature of $\mathcal{D}_1$ and the second feature of $\mathcal{D}_2$. To make it more similar to practical settings, where there the data collecting machines/sensors can also be the source of variation, we added Gaussian noise with variance $0.05$ to the second feature and Gaussian noise with variance $0.1$ to the third feature of $\mathcal{D}_2$.  Then, $\mathcal{D}_1, \mathcal{D}_2$ is split again into training and testing sets ratio 7:3, respectively. 
  
  \textbf{Experimental setup and results. } For each simulation, after generating the data, we use ComImp to merge the training sets of $\mathcal{D}_1$ and $\mathcal{D}_2$ into the corresponding training set of $\mathcal{D}$. We do similarly for the test sets. Next, we fit a regression model on each training set of $\mathcal{D}_1, \mathcal{D}_2, \mathcal{D}$, which gives fitted models $f_1,f_2,f$, respectively. We record the MSE of $f_1, f_2, f_3$ on the corresponding test set of $\mathcal{D}_1, \mathcal{D}_2, \mathcal{D}$, respectively. Note that the test set of $\mathcal{D}$ is the merge between the test sets of $\mathcal{D}_1$ and $\mathcal{D}_2$. Then, we report the mean and variance of the mean square error (MSE) on test sets of 10000 runs in Table \ref{tab-regr}.
  
  \textbf{Analysis. } From Table \ref{tab-regr}, we see that the MSE of $f$ (the model fitted on merged training parts of $\mathcal{D}_1, \mathcal{D}_2$) on the test of $\mathcal{D}$ is significantly lower than $f_1$ and $f_2$. Also, recall that the test set of $\mathcal{D}$ is the merge between the test sets of $\mathcal{D}_1$ and $\mathcal{D}_2$, and the labels are stacked accordingly. This shows the power of merging datasets via ComImp for regression.

\subsection{Experiments of merging two datasets}\label{sec-seed-result}

\paragraph{Data} We conduct experiments on the Seed dataset.
% \cite{charytanowicz2010complete}, a dataset of 210 samples and 7 features. First, we split it into $\mathcal{D}_1$ of 168 samples and $\mathcal{D}_2$ of 42 samples. Then, 
We delete the first two columns of the input in $\mathcal{D}_1$ and split it into training and testing sets of equal sizes. In addition, we delete the last columns of the input data in $\mathcal{D}_2$ and split it into the training and testing sets of equal sizes. We fit models $f_1, f_2$ on the training set of $\mathcal{D}_1, \mathcal{D}_2$, respectively. In addition, we use ComImp to combine the training sets of $\mathcal{D}_1, \mathcal{D}_2$ to the training sets of $\mathcal{D}$, and do similarly for the testing portions. Then, we fit a model $f$ on the training set of $\mathcal{D}$. We repeat the experiment 1000 times and report the mean and variance of the accuracy in Table \ref{tab-merge2}. 

\paragraph{Analysis} Note that in Table \ref{tab-merge2}, for $f$, the third (fifth) column contains the results on the portion of the test set of $\mathcal{D}$ that corresponds to the test set of $\mathcal{D}_1 (\mathcal{D}_2)$. From Table \ref{tab-merge2}, we can see that the performance of the model fitted on the merged data improves significantly for $\mathcal{D}_2$, the smaller dataset, and slightly for $\mathcal{D}_1$. Specifically, when using Logistic Regression, the accuracy of $f$ on $\mathcal{D}_2$ is 0.896, which is 3.8\% higher than $f_2$, whose accuracy is 0.858. Therefore, ComImp benefits the small sample dataset much better than the large sample one.

\begin{table*}[ht]
\caption{mean $\pm$ variance of the accuracy on test sets of  models fitted on $\mathcal{D}_1,\mathcal{D}_2,\mathcal{D}$ of the Seed dataset, respectively.}
\label{tab-merge2}
\begin{center}
\begin{tabular}{|c|c|c|c|c|}
\hline
\multicolumn{1}{|c|}{}& \multicolumn{2}{|c|}{\bf test set of $\mathcal{D}_1$}  &\multicolumn{2}{|c|}{\bf test set of $\mathcal{D}_2$}
\\ \hline 
\multicolumn{1}{|c|}{Classifier}& \multicolumn{1}{|c|}{\bf ${f}_1$}  &\multicolumn{1}{|c|}{\bf ${f}$} &\multicolumn{1}{|c|}{\bf ${f}_2$} &\multicolumn{1}{|c|}{\bf ${f}$}
\\ \hline 
Logistic Regression & 0.909 $\pm$ 0.001 &0.913 $\pm$ 0.001 & 0.858 $\pm$ 0.006 & 0.896 $\pm$ 0.003\\\hline
SVM & 0.917 $\pm$ 0.001 & 0.925 $\pm$ 0.001 & 0.882 $\pm$ 0.003 & 0.890 $\pm$ 0.003\\\hline
\end{tabular}
\end{center}
\end{table*}

\subsection{Experiments of merging three datasets}
\paragraph{Data} We conduct experiments on the Wine dataset.
% \cite{scikit-learn}, a dataset of 178 samples and 13 features.
We first randomly split the data into three datasets $\mathcal{D}_1, \mathcal{D}_2, \mathcal{D}_3$. Then, we delete the first column of the input of $\mathcal{D}_1$, the last 8 columns of $\mathcal{D}_2$, and the $5^{th}$ and $6^{th}$ column of the input of $\mathcal{D}_3$. %Next, we split $\mathcal{D}_1, \mathcal{D}_2, \mathcal{D}_3$ into training and testing sets of equal sizes. We repeat the experiment 1000 times and report the mean and variance of the accuracy in Table \ref{tab-merge3}. 

\paragraph{Analysis} Note that in Table \ref{tab-merge3} of the results for this experiment, for $f$, the third/fifth/seventh column contains the results on the portion of the test set of $\mathcal{D}$ that corresponding to the test set of $\mathcal{D}_1/ \mathcal{D}_2/ \mathcal{D}_3$. From Table \ref{tab-merge3}, we can see that the performance of the model fitted on the merged data improves the accuracy on the test sets of all the component datasets $\mathcal{D}_1, \mathcal{D}_2, \mathcal{D}_3$. Note that the sizes of $\mathcal{D}_1, \mathcal{D}_2, \mathcal{D}_3$ are all relatively small. So, we can see that ComImp improves the model fitting process when combining small datasets together.

\begin{table*}[ht!]
\caption{mean $\pm$ variance of the accuracy on test sets of models fitted on $\mathcal{D}_1,\mathcal{D}_2, \mathcal{D}_3, \mathcal{D}$ of the Wine dataset, respectively.}
\label{tab-merge3}
\begin{center}
\small
\begin{tabular}{|c|c|c|c|c|c|c|}%{lllllll}
\hline
\multicolumn{1}{|c|}{}& \multicolumn{2}{|c|}{\bf test set of $\mathcal{D}_1$}  &\multicolumn{2}{|c|}{\bf test set of $\mathcal{D}_2$}&\multicolumn{2}{|c|}{\bf test set of $\mathcal{D}_3$}\\ \hline 
\multicolumn{1}{|c|}{Classifier}& \multicolumn{1}{c}{\bf ${f}_1$}  &\multicolumn{1}{|c|}{\bf ${f}$} &\multicolumn{1}{|c|}{\bf ${f}_2$} &\multicolumn{1}{|c|}{\bf ${f}$}&\multicolumn{1}{|c|}{\bf ${f}_3$} &\multicolumn{1}{|c|}{\bf ${f}$}
\\ \hline 
Logistic Regression  & 0.912 $\pm$ 0.004 & 0.941 $\pm$ 0.002  & 0.735 $\pm$ 0.010& 0.781 $\pm$ 0.007&0.946 $\pm$ 0.003& 0.971 $\pm$ 0.001\\\hline
SVM & 0.934 $\pm$ 0.003 & 0.955 $\pm$ 0.001 & 0.793 $\pm$ 0.007& 0.826 $\pm$ 0.005 &0.953 $\pm$ 0.002&0.968 $\pm$ 0.001\\\hline
\end{tabular}
\end{center}
\end{table*}

\subsection{ComImp with transfer learning}

\paragraph{Data} We conduct experiments on multi-variate time series datasets related to EEG signals. The \ac{EEG} dataset is collected for epileptic seizure recognition \cite{eeg_seizure}%,andrzejak2001indications}
. The dataset used for the experiment consists of 178 time points equivalent to one second of \ac{EEG} recording. The recordings of seizure activity and non-seizure activity are labeled in the dataset. We first randomly split the data into two datasets $\mathcal{D}_1, \mathcal{D}_2$ with 6:4 ratio. Then,  to simulate the bad EEG recordings with missing values, we delete the first 16 columns of $\mathcal{D}_1$ and the last 17 columns of $\mathcal{D}_2$. Next, we split $\mathcal{D}_1, \mathcal{D}_2$ into training and testing sets with a 7:3 ratio. 
To compare our methods with transfer learning, we conduct the following two procedures: 

\begin{itemize}
\item We train a fully connected two-layered neural network ($f_1$) on the training portion of $\mathcal{D}_1$. 
%We name the model \textit{LogReg}. 
Next, we transfer the weights of $f_1$ to train a fully connected network $f_2$ on the training portion of $\mathcal{D}_2$ and fine-tune $f_2$. 

\item For our method, we use ComImp to combine $\mathcal{D}_1$ and $\mathcal{D}_2$, which gives us $\mathcal{D}$ with $\mathcal{D}_{train}$ is the merge between the training portion of $\mathcal{D}_1$ and $\mathcal{D}_2$. In addition, $\mathcal{D}_{test_1}$ corresponding to the testing portion of $\mathcal{D}_1$, and $\mathcal{D}_{test_2}$ corresponding to the testing portion of $\mathcal{D}_2$. We train a model $f$ on $\mathcal{D}_{train}$.  Then, we transfer the weights of $f$ onto the training portion of $\mathcal{D}_1$ and fine-tune the model, which gives us model $f^{ComImp}_{1}$. We do similarly for $\mathcal{D}_2$, which gives us $f^{ComImp}_{2}$.
\end{itemize}

We {run the model for 10,000 epochs} and report the classification accuracy of $f_1$ on the testing portion of $\mathcal{D}_1$, and the classification accuracy of $f_2$ on the testing portion of $\mathcal{D}_2$ in Table \ref{tab-eeg}. In addition, we report the classification accuracy of $f^{ComImp}_{1}, f^{ComImp}_{2}$ on $\mathcal{D}_{test_1}$, $\mathcal{D}_{test_2}$, which corresponding to the testing portion of $\mathcal{D}_1, \mathcal{D}_2$, in column 3 and 5 of Table \ref{tab-eeg}, respectively.

\textbf{Analysis.} We observe in Table \ref{tab-eeg} that using ComImp to combine the training parts of $\mathcal{D}_1, \mathcal{D}_2$, train a model $f$ on the merged and fine-tuning on the corresponding training parts of $\mathcal{D}_1$ and $\mathcal{D}_2$ yields a better classification accuracy result. Specifically, the performance gain is 2.2 $\%$ for the test of $\mathcal{D}_1 $ and 1.0 $\%$ over the test set of $\mathcal{D}_2$. The gain in accuracy result demonstrates the positive impact of merging datasets by ComImp before fine-tuning.
\subsection{Data imputation performance for missing datasets}
\paragraph{Data} We conduct experiments on the Wine dataset. 
% \cite{scikit-learn}.  
% First, we split it into $\mathcal{D}_1$ of 125 samples and $\mathcal{D}_2$ of 53 samples. Then,
We delete the first two columns of the input in $\mathcal{D}_1$ and split it into training and testing sets of equal sizes. In addition, we delete the last two columns of the input data in $\mathcal{D}_2$. %and split it into training and testing sets of equal sizes. 
Then, we generate missing data at missing rates of $20\%, 40\%, 60\%$ on each training/testing set. In addition, we use ComImp to combine the corresponding training and testing sets of $\mathcal{D}_1, \mathcal{D}_2$ to the training and testing sets of $\mathcal{D}$. 

For the ComImp approach, the missing values are automatically filled after merging the datasets. For the non-ComImp approach, we use softImpute to impute missing values. Then, we fit a models $f_1, f_2, f$ on the training set of $\mathcal{D}_1, \mathcal{D}_2, \mathcal{D}$, respectively. %We repeat the experiment 1000 times and report the mean and variance of the accuracy on test sets in Table \ref{tab-imputation}. 

\paragraph{Analysis.} Note that in the table of results (Table \ref{tab-imputation}), for $f$, the fourth (sixth) column contains the results on the portion of the test set of $\mathcal{D}$ that corresponding to the test set of $\mathcal{D}_1 (\mathcal{D}_2)$. From Table \ref{tab-imputation}, we can see that the performance of $f$ on the test set of $\mathcal{D}_1$ is the same as $f_1$ in most cases. However, the performance of $f$ on the test set of $\mathcal{D}_2$ is significantly better than $f_2$. For example, at 80\% missing rate, when using SVM as the classifier, the accuracy of $f$ on the test set of $\mathcal{D}_2$ is $40.692$, which is 8.9\% higher than $f_2$, whose accuracy is 0.603. Note that $\mathcal{D}_1$ has 125 samples and $\mathcal{D}_2$ has 53 samples. Therefore, combining datasets aid the dataset with a smaller sample size significantly.
% \vspace{-10pt}
\begin{table*}[ht!]
\caption{mean $\pm$ variance of the accuracy on test sets of  models fitted on $\mathcal{D}_1,\mathcal{D}_2$ of the Wine dataset, respectively.}
\label{tab-imputation}
\begin{center}
\small
\begin{tabular}{|c|c|c|c|c|c|}

\hline
\multicolumn{2}{|c|}{}& \multicolumn{2}{|c|}{\bf test set of $\mathcal{D}_1$}  &\multicolumn{2}{|c|}{\bf test set of $\mathcal{D}_2$}
\\ \hline 
\multicolumn{1}{|c|}{missing rate}&\multicolumn{1}{|c|}{Classifier}& \multicolumn{1}{|c|}{\bf ${f}_1$}  &\multicolumn{1}{|c|}{\bf ${f}$} &\multicolumn{1}{|c|}{\bf ${f}_2$} &\multicolumn{1}{|c|}{\bf ${f}$}
\\ \hline 
\multirow{2}{*}{20\%}&Logistic Regression  & 0.917 $\pm$ 0.001 & 0.908 $\pm$ 0.001  & 0.859 $\pm$ 0.007& 0.874 $\pm$ 0.004\\ \cline{2-6}
&SVM & 0.939 $\pm$ 0.001 & 0.939 $\pm$ 0.001 & 0.882 $\pm$ 0.006& 0.893 $\pm$ 0.004\\\hline
\multirow{2}{*}{40\%}&Logistic Regression  & 0.883 $\pm$ 0.002 & 0.877 $\pm$ 0.002 & 0.804 $\pm$ 0.009& 0.837 $\pm$ 0.005\\ \cline{2-6}
&SVM & 0.901 $\pm$ 0.001 & 0.901 $\pm$ 0.001 & 0.814 $\pm$ 0.009& 0.846 $\pm$ 0.005\\ \hline
\multirow{2}{*}{60\%}&Logistic Regression  & 0.834 $\pm$ 0.002 & 0.831 $\pm$ 0.002 & 0.733 $\pm$ 0.010& 0.783 $\pm$ 0.007\\ \cline{2-6}
&SVM & 0.843 $\pm$ 0.002 & 0.846 $\pm$ 0.002 & 0.716 $\pm$ 0.012 & 0.779 $\pm$ 0.008\\\hline
\multirow{2}{*}{80\%}&Logistic Regression  & 0.762 $\pm$ 0.004 & 0.762 $\pm$ 0.003 & 0.645 $\pm$ 0.012& 0.712 $\pm$ 0.008\\ \cline{2-6}
&SVM & 0.762 $\pm$ 0.004 & 0.766 $\pm$ 0.003 & 0.603 $\pm$ 0.014& 0.692 $\pm$ 0.010\\\hline
\end{tabular}
\end{center}
\end{table*}

\subsection{Combining datasets using dimension reduction}
\paragraph{Data} We conduct experiments on the Gene dataset \cite{Dua:2019}.  First, we split it into $\mathcal{D}_1, \mathcal{D}_2$ with a ratio $7:3$, and then split each of them into halves for training and testing. Then, we delete the first 10,000 columns of the input in $\mathcal{D}_1$ and the last 10,000 columns of the input in $\mathcal{D}_2$. We apply ComImp with PCA and train a neural network $f$ on the merged data. Also, we train a separate model $f_1$ on $\mathcal{D}_1$, and $f_2$ on $\mathcal{D}_2$. We repeat the experiment 100 times and report the mean and variance of the accuracy in Table  \ref{tab-gene}. 

\paragraph{Analysis} Similar to the previous experiments, in Table \ref{tab-gene}, for $f$, the third (fifth) column contains the results on the portion of the test set of $\mathcal{D}$ that corresponding to the test set of $\mathcal{D}_1 (\mathcal{D}_2)$. From Table \ref{tab-gene}, we can see that the PCA-ComImp improves the performance for SVM on the test sets of both $\mathcal{D}_1$ and $\mathcal{D}_2$. For example, on the test set of $\mathcal{D}_2$, $f$ has an accuracy of $0.971$, which is higher than $f_2$ (0.955).

\begin{table*}[ht!]
\caption{mean $\pm$ variance of the accuracy on test sets of regression models fitted on $\mathcal{D}_1,\mathcal{D}_2,\mathcal{D}$ of the Gene dataset, respectively.}
\label{tab-gene}
\begin{center}
\begin{tabular}{|c|c|c|c|c|}
\hline
\multicolumn{1}{|c|}{}& \multicolumn{2}{|c|}{\bf test set of $\mathcal{D}_1$}  &\multicolumn{2}{|c|}{\bf test set of $\mathcal{D}_2$}
\\ \hline 
\multicolumn{1}{|c|}{Classifier}& \multicolumn{1}{|c|}{\bf ${f}_1$}  &\multicolumn{1}{|c|}{\bf ${f}$} &\multicolumn{1}{|c|}{\bf ${f}_2$} &\multicolumn{1}{|c|}{\bf ${f}$}
\\ \hline 
Logistic Regression & 0.998 $\pm$ 0.002 &0.998 $\pm$ 0.002 & 0.996 $\pm$ 0.006 & 0.996 $\pm$ 0.006\\\hline
SVM & 0.995 $\pm$ 0.007 & 0.997 $\pm$ 0.003 & 0.955 $\pm$ 0.021 & 0.971 $\pm$ 0.012\\ \hline
\end{tabular}
\end{center}
\end{table*}

\begin{table}[ht]
\caption{Comparison of transfer learning and transfer learning with ComImp on the \ac{EEG} dataset.}
\label{tab-eeg}
\begin{center}
\begin{tabular}{|c|c|c|c|c|}
\hline
\multicolumn{1}{|c|}{}& \multicolumn{2}{|c|}{\bf test set of $\mathcal{D}_1$}  &\multicolumn{2}{|c|}{\bf test set of $\mathcal{D}_2$}
\\ \hline  
\multicolumn{1}{|c|}{Classifier}& \multicolumn{1}{|c|}{\bf ${f}_1$}  &\multicolumn{1}{|c|}{\bf $f^{ComImp}_{1}$} &\multicolumn{1}{|c|}{\bf ${f}_2$} &\multicolumn{1}{|c|}{\bf $f^{ComImp}_{2}$}
\\ \hline 
Transfer Learning & 0.774 & 0.796 & 0.839  & 0.849 \\
\hline 
\end{tabular}
\end{center}
\end{table}
% \vspace{-10pt}
\subsection{A case analysis of when OsImp may fail}\label{sec-osimp-fail}

This experiment is conducted on the Seed dataset with the same setup as in section \ref{sec-seed-result} except that we delete the first three features of $\mathcal{D}_1$ and the last four features of $\mathcal{D}_2$. The results are reported in Table \ref{tab-osimp-fail}.

From Table \ref{tab-osimp-fail}, we see that the performance of the model trained only on the training set of $\mathcal{D}_1$ and the model trained only on the training set of $\mathcal{D}_2$ are better than the model trained on the merge of $\mathcal{D}_1$ and $\mathcal{D}_2$. Next, note that the Seed dataset has only seven features, and therefore, $\mathcal{D}_1$ and $\mathcal{D}_2$ have only one overlapping feature. This is an example of when the imputation of too many features while having only a few overlapping features may introduce too much noise and bias to compensate for the benefits of having more samples. 

\begin{table*}[ht]
\caption{mean $\pm$ variance of the accuracy on test sets of models fitted on $\mathcal{D}_1,\mathcal{D}_2,\mathcal{D}$ of the Seed dataset, respectively.}
\label{tab-osimp-fail}
\begin{center}
\begin{tabular}{|c|c|c|c|c|}
\hline
\multicolumn{1}{|c|}{}& \multicolumn{2}{|c|}{\bf test set of $\mathcal{D}_1$}  &\multicolumn{2}{|c|}{\bf test set of $\mathcal{D}_2$}
\\ \hline 
\multicolumn{1}{|c|}{Classifier}& \multicolumn{1}{|c|}{\bf ${f}_1$}  &\multicolumn{1}{|c|}{\bf ${f}$} &\multicolumn{1}{|c|}{\bf ${f}_2$} &\multicolumn{1}{|c|}{\bf ${f}$}
\\ \hline 
Logistic Regression & 0.912 $\pm$ 0.001 &0.911 $\pm$ 0.001 & 0.730 $\pm$ 0.017 & 0.675 $\pm$ 0.020\\\hline
SVM & 0.926 $\pm$ 0.001 & 0.923 $\pm$ 0.001 & 0.827 $\pm$ 0.008 & 0.826 $\pm$ 0.010\\\hline
\end{tabular}
\end{center}
\end{table*}

\section{Conclusions} \label{sec-concl}
In this paper, we introduced a novel method, \textbf{ComImp}, to combine datasets. The proposed method increases the sample size and makes use of information from multiple datasets, and is highly beneficial for small sample datasets, as illustrated in the experiments. We also illustrated that ComImp could improve model fitting even more when being used with transfer learning.
In addition, we introduced \textbf{PCA-ComImp}, a variant of ComImp, for combining datasets with a large number of features not in common. 

It is worth noting that imputation may also introduce noise and bias. 
As illustrated in section \ref{sec-osimp-fail}, if the number of overlapping features is too small compared to the number of non-overlapping features, the advantage of having more samples may not be able to compensate for that. In addition, the choice of imputation methods has crucial effects on classification results. 
Therefore, it is also worth investigating more imputation methods for ComImp and PCA-ComImp in the future. Moreover, when the datasets under combination are of high dimension, it is possible to increase the imputation speed of an imputation algorithm by using the PCAI framework \cite{nguyen2022principle}. 

In addition, it is also interesting to see if the idea in this paper can be applied to images or other types of data. Further, the method may also be applicable for combining the datasets for MTL approaches. For example, ComImp can be used in the multi-exit architecture proposed for MTL in \cite{lee2021multi}, where the different datasets need to be combined. 

Next, note that about the proposed PCA-ComImp technique uses PCA, which can lead to a loss of interpretability of the features that are not shared among datasets. Therefore, in the future, it is worth studying how feature selection techniques \cite{nguyen2019faster} can be used instead of dimension reduction for this problem or parallel features selection techniques \cite{nguyen2022parallel} for large datasets. 

Lastly, it is worth investigating further the use of ComImp with domain adaptation for further boosting of performance in medical tasks such as in diabetic retinopathy grading \cite{nguyen2021self}, disease detection \cite{hicks2018deep}, spermatozoa tracking \cite{thambawita2022visem, thambawita2022medico}, life-logging \cite{riegler2023scopesense}, etc. Another interesting problem is how the proposed techniques can be used with data augmentation or synthetic data generation \cite{thambawita2022singan} to ameliorate the deficiency of data problems in medicine and various other fields. These questions are worth investigating and will be the topics for our future research. 

% One study by \cite{lee2021multi} uses \ac{MTL}  with the multi-exit architecture where the different datasets are integrated via interpolation. %When merging the datasets for the \ac{MTL} approach, there may be a need to impute missing data features or upsample where the \ac{ComImp} method can be adopted.
% \clearpage
% \newpage
% \clearpage
\bibliographystyle{plain}
\bibliography{bibfile, merge_data}

\end{document}